\DeclareMathOperator*{\argmin}{\arg\!\min}
\DeclareMathOperator*{\argmax}{\arg\!\max}
\DeclareMathOperator*{\UNK}{UNK}
\newtheorem*{theorem}{Theorem}
\newtheorem*{lemma}{Lemma}
\title{Learning to Acquire Information}
\author{} 
\author{ {\bf Yewen Pu} \\
MIT \\
yewenpu@mit.edu \\
\And
{\bf Leslie P Kaelbling}  \\
MIT          \\
lpk@csail.mit.edu \\
\And
{\bf Armando Solar-Lezama}   \\
MIT \\
asolar@csail.mit.edu   \\
}
\begin{document}

\maketitle

\begin{abstract}
We consider the problem of diagnosis where a set of simple observations are used to infer a potentially complex hidden hypothesis. Finding the optimal subset of observations is intractable in general, thus we focus on the problem of active diagnosis, where the agent selects the next most-informative observation based on the results of previous observations. We show that under the assumption of uniform observation entropy, one can build an \emph{implication model} which directly predicts the outcome of the potential next observation conditioned on the results of past observations, and selects the observation with the maximum entropy. This approach enjoys reduced computation complexity by bypassing the complicated hypothesis space, and can be trained on observation data alone, learning how to query without knowledge of the hidden hypothesis.
\end{abstract}

\section{INTRODUCTION}

In active diagnosis, an agent attempts to discover a hidden hypothesis by asking a series of well chosen questions. For instance, a sushi shop might wish to learn the preference of a customer by asking a series comparison questions; a network monitor might wish to detect a faulty link by making a sequence of connectivity queries.

The difficulty of diagnosis lies in the cost of making observations: it is often too expensive to obtain results for all the observations. A more practical task is to find a subset of observations that yields the most information. Krause and Guestrin (2011) demonstrated that a greedy strategy that iteratively maximizes information gain can obtain a subset of observations that is near-optimal; Golovin and Krause (2011) extended the result to the adaptive case, where the agent chooses the next most-informative observation based on the outcomes of all previous observations. In this paper we consider the setting where the agent is using the greedy strategy in the adaptive case.

However, even in the greedy scheme there can be a significant computational cost. At execution time, the information gain must be computed for each candidate observation, which requires computing entropy terms that sum or integrate over the complex hypothesis space, which can be computationally infeasible. For example, in the case of detecting link failures, the total number of possible failure configurations is $2^M$ where $M$ is the number of links. Prior work has developed various approximation methods to make information gain computable. Rish (2004), Zhen (2012) and Bellala (2013) simplify the complex hypothesis space by exploiting independence structures in the graphical model and using problem specific assumptions to further restrict the hypothesis space. In cases where observations can rule out a hypothesis completely, one can maintain a version space: a set of hypotheses that are consistent with the observations made so far, and attempt to shrink its size through observations. Tong (2001) and Nowak (2008) directly reduce the size of the version space. Seung (1992) maintains a sample of the version space, called a committee, and selects the observation that maximally reduces the size of the committee. Joshi (2009), Tong (2001), and Lewis (1994) consider the special case where the hypothesis is a classifier and observations are data points. They employ a heuristic that maintains a single current-best classifier, and selects the datapoint that is closest to its decision boundary. Holub (2008) aims to reduce the entropy on the entire dataset, and uses a committee of classifiers to estimate this entropy.

We propose an approach to active diagnosis that is fundamentally different from these previous approaches: We wish to select the next most-informative observation without explicitly modeling or maintaining any notion of the hypothesis space, but by modeling the implications between observations directly. The intuition is that some observations might be redundant given other observations: In the network example if node $i$ is connected to node $j$ via a single path, connectivity between them implies all links on the path are functional. In general, implications may be probabilistic rather than deterministic: Given $K$ current observations $\{O^1=o^1 \dots O^K=o^k\}$, one can model the outcome of an unseen observation $O'$ by $P(O'|O^1=o^1 \dots O^K=o^k)$. Our key result is that under the assumption of uniform observation entropy (given a particular hypothesis), the next most-informative observation is precisely the observation with the maximum entropy conditioned on the observations already made:
\begin{equation*}
\begin{split}
  O^{best} = \argmax_{O'} H(O'|O^1=o^1 \dots O^K=o^k)
\end{split}
\end{equation*}
This entropy can be easily computed from $P(O'|O^1=o^1 \dots O^K=o^k)$, which we model directly with a neural-network called the implication model. It can be trained in a supervised fashion on a  dataset of observation data: The dataset with $n$ training examples will contain values for all possible observation dimensions associated with each of n underlying true hypotheses, but does not require that it be labeled with that true hypothesis. Once the implication model is trained, it is used in an observation collection algorithm, OC, which adaptively selects the next most-informative observation conditioned on previous observations. Once a set of informative observations are made, we deliver the best hypothesis from these observations in a problem specific manner.

This work makes the following contributions:
\begin{itemize}
	\item We prove that under the assumption of uniform observation entropy, the next most-informative observation can be made without explicit modeling of the hypothesis space, but by modeling the implications between observations instead (Section 2). This model has the dual advantages of being computationally efficient to use and being trainable on unlabeled data.
    \item We describe a method of modeling and training the implication model using a neural network, using techniques from sparsifying auto-encoders (Section 3).
    \item We evaluate our approach on several distinct benchmark problems, and provide problem-specific algorithms for delivering the best hypothesis after a set of informative observations have been made. We compare our results against reasonable baseline algorithms (Section 4).
\end{itemize}

\section{OVERVIEW}

To give an intuition of our approach, consider a simplified, one-player variant of the game BattleShip \footnote{For rules of this game see: \url{https://en.wikipedia.org/wiki/Battleship_(game)}}. Rather than ``sinking'' all the ships on a given board, the task is to infer the structure of the board itself, which consist of locations and orientations of all the ships. 


One algorithm to solve this problem is to keep a set of all possible boards, and with each additional observation, discard all the boards which are incompatible with that observation (for instance, if a query to coordinate $(3,4)$ results in a ``miss'' then all the boards with a ship occupying the coordinate $(3,4)$ will be discarded). A query is chosen by maximizing the ``disagreement'' between all the remaining boards in the set (Nowak 2008). Intuitively, if a coordinate $(x,y)$ is occupied by a ship in half of the remaining board and unoccupied in the other half (high disagreement), a query to this coordinate would narrow down the choice of compatible boards by half. This algorithm terminates when there is only 1 board remaining in the set.

The drawback of the aforementioned algorithm is that the computation of disagreement, which requires enumeration over all the remaining boards in the set. In a game such as battleship, there can be an exponentially many number of boards (up to $2^{38}$ in our experiment), which makes the first query already infeasible to compute. 

Instead, we assume the existence of an oracle, which, given a set of past observations (a set of coordinates and whether it resulted in a ``hit'' or a ``miss''), is capable of predicting the probability of ``hit'' or ``miss'' of an arbitrary unseen coordinate. Given this oracle, one can ask the following question: Which unseen coordinate is the most ``confusing'' to the oracle (i.e. closest to a $50\% - 50\%$ split between ``hit'' and ``miss'')? It turns out that under the assumption of uniform observation entropy, the most informative query coincides with the query which maximally confuses the oracle. The existence of such an oracle allows one to make the most informative query without having to enumerate over the space of possible boards, saving computation time. We model the oracle as a neural network, which is trained offline on the task of board-completion: Given a complete board $B$ where all the coordinates are labeled as either ``hit'' or ``miss'', a random subset of these coordinates are obscured by hiding the labels, forming a partially observable board $B'$. The neural network is tasked to produce $B$ from $B'$ in via supervised learning.

We will now formalize the specifics of this approach.   
\section{ACTIVE DIAGNOSIS}

In this section we formally define the problem of active diagnosis, show that under the uniform observation entropy assumption the next most-informative test is the conditioned observation entropy maximizer, and present our algorithm for active diagnosis.

\subsection{PROBLEM DEFINITION}
An observation problem is made up of a hidden hypothesis $S$, which is a random variable that can take on values $s \in dom(S)$, and a finite set of observation random variables $O_1 \dots O_N$, which can take on values $o_i \in dom(O_i)$. The hypothesis is drawn from a distribution $s \sim P(S)$ and given a hypothesis the observations are drawn from conditional distributions $\forall i, o_i \sim P(O_i | S=s)$. In this work we consider a discrete hypothesis space and observations with discrete domains.

We adopt the greedy strategy proposed by Golovin and Krause (2011) that aims to maximize information gain one observation at a time: Given a (potentially empty) set of current observations $\{ O^1=o^1 \dots O^K=o^K \}$ the task is to choose an additional observation $O'$ among the set of observations $O_1 \dots O_N$ that maximizes mutual information between it and the hidden hypothesis. For brevity, we will write the past observations as $O^{-} := \{O^1=o^1 \dots O^K=o^K\}$. The objective is to find
\[
	O^{best} = \argmax_{O' \in \{O_1 \dots O_N\}} I(S, O' | O^{-}) ~~.
\]
Rather than computing the mutual information directly, the typical approach solves the equivalent problem of selecting the observation that minimizes posterior conditional entropy:
\begin{equation*}
\begin{split} 
\argmax_{O'} I(S, O' | O^{-}) = & \argmax_{O'} H(S | O^{-}) - H(S | O', O^{-}) \\
=& \argmin_{O'} H(S | O', O^{-}) ~~.
\end{split}
\end{equation*}

This objective of reducing posterior entropy on the hypothesis space is the approach used by Bellala (2013) and Zheng (2012). The posterior conditional entropy $H(S | O', O^{-})$ can be computed as:
\begin{align*}
 &H(S | O', O^{-}) \\
=& \sum_{o} P(O'=o | O^{-}) \big(\sum_{s}P(S=s|O'=o, O^{-}) 
 \\ & \log P(S=s|O'=o, O^{-}) \big) ~~.
\end{align*}
This computation requires a summation over a potentially exponentially large hypothesis space, leading to a time of $O(dom(O)dom(S))$. Zheng (2012) addresses this issue by taking advantage of the independence structures of the graph and develops efficient loopy belief propagation algorithms that approximate this entropy. Bellala (2013) develops a problem-specific ranking function that orders the observations according to their entropy. However, both approaches are problem specific and are difficult to generalize to other kinds of problems.

\subsection{APPROACH}

Rather than reducing the entropy on the hypothesis space, we consider an alternative decomposition:
\begin{equation*}
\begin{split}
I(S, O' | O^{-})=H(O' | O^{-}) - H(O' | S, O^{-}) ~~.
\end{split}
\end{equation*}
At first glance this decomposition is no less difficult: both entropy terms depend on $O'$, which would suggest that both need to be computed. We now introduce the assumption of uniform observation entropy, which allows us to simplify $H(O' | S, O^{-})$ to a constant.

\textbf{Definition} An observation problem $(S, O_1 \dots O_N)$ has the {\it uniform observation entropy} (UOE) property if, for each hypothesis, the conditional entropy for all observation dimensions $O_i$ are equal: $\forall s \in dom(S), ~~\forall i,j \in \{1 \dots N\}$:
\begin{equation*}
    H(O_i | S=s) = H(O_j | S=s) ~~.
\end{equation*}
This is a reasonable assumption for many domains: a common example is when the observations are obtained from a noisy channel that corrupts all observations equally. When observations are deterministic, the uniform observation entropy property holds trivially.

We begin by showing that the entropy of new observations remains uniform when conditioned on past observations:
\begin{lemma}
\begin{equation*}
\begin{split}
\forall i,j \in \{1 \dots N\}. ~~ H(O_i | S, O^{-}) = H(O_j | S, O^{-}) ~~.
\end{split}
\end{equation*}
\end{lemma}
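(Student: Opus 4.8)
The plan is to reduce the claim to the UOE hypothesis by showing that, once $S$ is fixed, the past observations $O^-$ carry no further information about a fresh observation $O_i$, so that $H(O_i \mid S=s, O^-) = H(O_i \mid S=s)$ for every $s$. The structural fact I would invoke here is the one already built into the generative model of Section 3.1: given the hypothesis, the observations $O_1, \dots, O_N$ are mutually (conditionally) independent, since each $o_i$ is drawn from its own $P(O_i \mid S=s)$. Consequently, for any $O_i$ not already among $O^1, \dots, O^K$, the conditional law $P(O_i \mid S=s, O^1=o^1, \dots, O^K=o^K)$ collapses to $P(O_i \mid S=s)$, and hence the two distributions have the same entropy.

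With that in hand the computation is short. Expanding the conditional entropy as an average over the posterior, $H(O_i \mid S, O^-) = \sum_{s} P(S=s \mid O^-)\, H(O_i \mid S=s, O^-)$, and substituting the collapse above gives $H(O_i \mid S, O^-) = \sum_{s} P(S=s \mid O^-)\, H(O_i \mid S=s)$. The same identity holds verbatim with $j$ in place of $i$. Now invoke the UOE property: $H(O_i \mid S=s) = H(O_j \mid S=s)$ for every $s \in dom(S)$, so the two weighted sums agree term by term, which is precisely $H(O_i \mid S, O^-) = H(O_j \mid S, O^-)$.

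I expect the only real content — and the step worth stating carefully — to be this conditional-independence reduction, i.e.\ that conditioning additionally on $O^-$ does not perturb the law of $O_i$ given $S=s$. In information-theoretic terms one can phrase it as $H(O_i \mid S, O^-) = H(O_i \mid S) - I(O_i ; O^- \mid S)$ and must argue the correction term vanishes; conditional independence of the observations given $S$ makes it zero, and without some such assumption the correction could depend on $i$ and the lemma would fail. Everything else (the definition of conditional entropy as a posterior average, the termwise comparison under UOE) is routine. One minor caveat to flag in passing: the statement is intended for observations $O_i, O_j$ not already contained in $O^-$, consistent with the use of the "additional observation" $O'$ in the algorithm; for observations already made the conditional entropy is trivially zero anyway.
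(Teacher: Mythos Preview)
Your proposal is correct and follows essentially the same route as the paper: expand $H(O_i\mid S,O^-)$ as a posterior average over $s$, drop the conditioning on $O^-$ inside the summand, apply UOE termwise, and reassemble. The only difference is that you make explicit (and rightly flag as the substantive step) the conditional-independence reduction $H(O_i\mid S=s,O^-)=H(O_i\mid S=s)$, which the paper's proof uses silently in passing from the first line to the second; your added remarks about why this step is necessary and the caveat about $O_i\notin O^-$ are sound refinements rather than departures.
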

\begin{proof}
By the definition of conditional entropy, we have
\begin{equation*}
\begin{split}
H(O_i | S, O^{-}) =& \sum_{s} P(S=s|O^{-}) H(O_i | S=s, O^{-}) \\
=& \sum_{s} P(S=s|O^{-}) H(O_i | S=s) \\
\intertext{Then applying UOE, we get}
=& \sum_{s} P(S=s|O^{-}) H(O_j | S=s) \\
=& H(O_j | S, O^{-}) ~~.
\end{split}
\end{equation*}
\end{proof}
We use this lemma to derive a much simpler form of our original problem of maximizing mutual information.
\begin{theorem} For an observation problem $(S, O_1 \dots O_N)$ with the UOE property:
\begin{equation*}
\begin{split}
\argmax_{O'} I(S, O' | O^{-}) = \argmax_{O'} H(O' | O^{-}) ~~.
\end{split}
\end{equation*}
\end{theorem}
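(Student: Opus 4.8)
The plan is to use the symmetric decomposition of conditional mutual information together with the Lemma just proved. Concretely, I would start from
\begin{equation*}
I(S, O' \mid O^{-}) = H(O' \mid O^{-}) - H(O' \mid S, O^{-}),
\end{equation*}
which is the decomposition already introduced in the Approach subsection. The first term depends on the candidate $O'$ and is exactly the quantity we want to maximize; the goal is to argue the second term is irrelevant to the $\argmax$.

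The key step is to observe that, because $O'$ ranges over the fixed finite set $\{O_1,\dots,O_N\}$, the Lemma tells us $H(O_i \mid S, O^{-}) = H(O_j \mid S, O^{-})$ for every pair $i,j$, so there is a single constant $c := H(O_i \mid S, O^{-})$ (independent of which $i$ we pick) such that $H(O' \mid S, O^{-}) = c$ for all admissible $O'$. Substituting, $I(S, O' \mid O^{-}) = H(O' \mid O^{-}) - c$.

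Finally I would conclude by the elementary fact that adding or subtracting a constant not depending on the optimization variable does not change the maximizer: $\argmax_{O'} \big( H(O' \mid O^{-}) - c \big) = \argmax_{O'} H(O' \mid O^{-})$. This yields the claimed identity.

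I do not expect a genuine obstacle here; the proof is essentially a one-line consequence of the Lemma. The only points requiring care are (i) making explicit that the constant $c$ is well defined precisely because the Lemma gives equality across \emph{all} observation dimensions, which is exactly the set over which $\argmax$ is taken, and (ii) noting that the symmetric mutual-information decomposition used is valid for conditional mutual information in the discrete setting assumed in the problem definition. A remark could also be added that, unlike the $H(S\mid O',O^{-})$ term in the standard approach, neither surviving quantity requires summing over $dom(S)$, which is the computational payoff motivating the result.
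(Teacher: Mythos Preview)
Your proposal is correct and follows essentially the same route as the paper: decompose $I(S,O'\mid O^{-})=H(O'\mid O^{-})-H(O'\mid S,O^{-})$, invoke the Lemma to conclude the second term is constant in $O'$, and drop it from the $\argmax$. The additional care points and computational remark you include are fine elaborations but do not alter the argument.
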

\begin{proof}
According to the chain rule for mutual information, 
\begin{align*}
\argmax_{O'} &\,I(S, O' | O^{-}) \\
&=\argmax_{O'} H(O' | O^{-}) - H(O' | S, O^{-}) ~~. \\
\intertext{By the Lemma, the second term is constant for all possible values of $O'$, and hence}
&=\argmax_{O'} H(O' | O^{-}) ~~.
\end{align*}
\end{proof}
To compute $H(O' | O^{-})$ we require the probability values $P(O' | O^{-})$ for each possible $O'$. Naively, one can compute this probability by summing over the hypothesis space, but this would yield us no benefit. However, we can model this probability directly using a function approximator. The result is an efficient computation for $\argmax_{O'} H(O' | O^{-})$ that runs in $O(dom(O)NM)$ time, where $N$ is the total number of observations and $M$ depends on the complexity of our function approximator.

We call the model of $P(O' | O^{-})$ an \emph{implication model} because it attempts to deduce the outcome of a future observation based on a set of past observations. We will describe how to train this model in section 3.

\subsection{OBSERVATION COLLECTION}

We solve the active diagnosis problem in two parts: observation collection (OC) and hypothesis delivery. In OC, an observation is adaptively chosen at each time step by maximizing the entropy $H(O' | O^{-})$ until a budget of observations is exhausted. These observations are then used by the hypothesis delivery algorithm to obtain a hypothesis.

\begin{algorithm}
\kwRequire{$f(j,o,\lbrack o^1, \dots, o^K \rbrack)$ = $P(O_j=o| O^1=o^1 \dots O^K=o^K)$}
\kwRequire{$query_s(O)$}
 $Os$ = $\lbrack ~ \rbrack$ \\
 \While{size($Os$) $\le$ Budget}{
  O = $\argmax_{j \in 1 \dots N} H(O_j,Os)$ \\
  o = $query_s(O)$ \\
  $Os \leftarrow Os + \lbrack O=o \rbrack $ \\
 }
 \kwWhere{ $ H(O_j,Os) = \sum_{dom(O)} f(j,o,Os)\log(f(j,o,Os))$ }
 \kwReturn{$Os$}
 \caption{Observation Collection}
\end{algorithm}

The observation collection algorithm is detailed in Algorithm 1. Here, $f$ is a function that models $P(O_j=o|O^{-})$; $query$ is a function which returns the value for an observation given a hidden hypothesis $s$ that we are trying to discover. One can only call this function $Budget$ number of times.

Hypothesis delivery is done in a problem specific manner: For some problems it is appropriate to train a classifier that maps the collected observations to the hidden hypothesis; for other problems one can compute the hidden hypothesis directly from the observations. We will discuss the hypothesis delivery in detail in the experiment section. 

Our scheme is unique in that the observation collection algorithm is completely decoupled from the hypothesis delivery algorithm. This decoupling  allows one to explore different hypothesis delivery algorithms with the guarantee that the observations being used to deliver the hypothesis are well chosen. The observation collection algorithm depends only on $P(O' | O^{-})$, which can be trained on a dataset that is not annotated with hypothesis-specific information.

\section{IMPLICATION MODEL}

In this section we describe how to model $P(O' \mid O^{-})$. We first describe a neural network model that, given $O^{-}$, simultaneously computes $P(O'=o \mid O^{-})$ for all possible choices for $O'$ and $o$; We then describe how to train this neural network in a fashion similar to that of a sparse auto-encoder. Although this strategy can be generalized for non-binary discrete observations, we restrict our attention to the case of binary observations in the following.

\subsection{NEURAL NETWORK MODEL}

We consider a model that outputs $P(O' \mid O^{-})$ for all possible choices of observations simultaneously since the observation collection algorithm must determine the best $O'$ amongst all possible remaining observations. Note that $O^{-}$ has a variable length depending on how many observations have been made at the time of picking the next one. One may choose to encode this variable-length input with an LSTM, but we found that a simple feed-forward neural network that allows the value of an observation to be $\UNK$, the unknown value, to be more effective. 

Our model resembles a simple auto-encoder with a single, fully-connected hidden layer containing $M$ rectified linear units; however, the approach could be straightforwardly extended to use a deeper encoding/decoding network structure in more complex domains. The input to the network is a vector $(x^1_1,x^0_1,x^1_2,x^0_2,\dots,x^1_N,x^0_N)$, where:
\begin{equation*}
    (x^1_i,x^0_i) =
    \begin{cases}
      (1, 0) & \text{if}\ (O_i = 1) \in O^{-} \\
      (0, 1) & \text{if}\ (O_i = 0) \in O^{-} \\
      (0, 0) & \text{if}\ O_i = \UNK \\
    \end{cases} ~~.
\end{equation*}
The output is a vector $(y^1_1,y^0_1,y^1_2,y^0_2,\dots,y^1_N,y^0_N)$ where the distribution on values of $O_i$ is defined using softmax:
\begin{equation*}
    \begin{cases}
       P(O_i = 1) = & \frac{e^{y^1_i}}{e^{y^0_i}+e^{y^1_i}} \\
       P(O_i = 0) = & \frac{e^{y^0_i}}{e^{y^0_i}+e^{y^1_i}} \\
    \end{cases} ~~.
\end{equation*}
Figure~\ref{fig:imply} illustrates the architecture.

\begin{figure}[!ht]
  \centering
    \includegraphics[width=7cm,height=7cm,keepaspectratio]{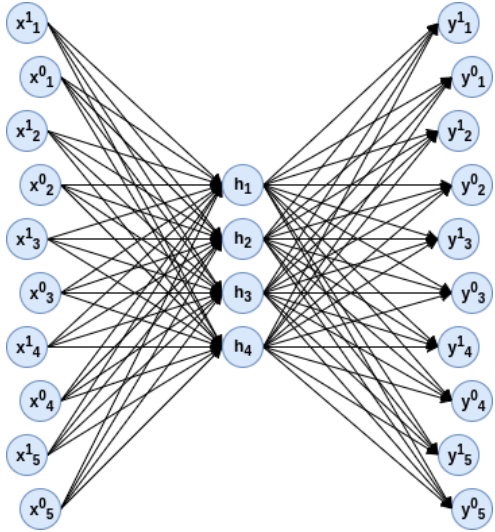}
  \caption{The implication model where there are a total of 5 observations and 4 hidden units}
    \label{fig:imply}
\end{figure}

\subsection{MODEL TRAINING}

Our training data is of the form\\ 
\[\{(x^{1(j)}_1,x^{0(j)}_1,x^{1(j)}_2,x^{0(j)}_2,\dots,x^{1(j)}_N,x^{0(j)}_N\}_{j=1 \dots n}\] where $(x^{1(j)}_i,x^{0(j)}_i)$ corresponds to the value of observation $O_i$ made on hidden hypothesis $s^{(j)} \sim P(S)$. Note that the value of $s^{(j)}$ need not be present in the data.

We perform supervised training of the network using stochastic gradient descent with input-output pairs $(\hat{x}^{(j)}, x^{(j)})$. We draw $x^{(j)}$ randomly from the training data set, and construct $\hat{x}^{(j)}$  from $x^{(j)}$ by randomly changing some elements of $x^{(j)}$ to $\UNK$ as follows: We first draw a number $\epsilon \sim uniform(0,1)$, then change each $x^{(j)}_i$ to $\UNK$ with probability $\epsilon$.  We use cross-entropy as the loss function between the softmax probability of $O_i$ and the expected output $(x^{1(j)}_i,x^{0(j)}_i)$. For the experiments in this paper, the hidden layer contains 200 units and is equipped with the ReLU activation function. We implement and train the implication model using TensorFlow.

One can think of this model as a type of sparse autoencoder Le (2013), but rather than sparsifying the hidden layer we are sparsifying the input layer. Intuitively, without any sparsification, the network learns the identity implication, $O_i=o \Rightarrow O_i=o$. Sparsifying the input layer forces the network to learn inter-relationships between the observations, such that a handful of observations is sufficient to recover the full set of observations.

Note that by randomly ablating a subset of the observation features, we learn a model that is capable of working with any subset of observations $O^{-}$. This could potentially be more work than necessary, as during observation collection we are following a particular greedy policy that chooses the new observation with the maximum entropy, leading to particular sequences of observations being made, which restricts the set of queries. A good avenue for future work might be to train on observation traces generated by OC so the network can specialize further to a particular distribution of observations.

\section{EXPERIMENTS}

In this section we evaluate the performance of our active diagnosis algorithm on several simple active diagnosis problems. Here we outline the problems and their characteristics before elaborating them in detail \footnote{For specifics of these experiments see: \url{https://github.com/evanthebouncy/uai2017_learning_to_acquire_information}}.

\paragraph{Battleship} This is a variant of the classic battleship game, where the goal is to infer the configurations of all 5 ships on a 10 by 10 board with as few queries to the board as possible. The hypothesis space consists of approximately $2^{38}$ different configurations, and each query is a (x,y) coordinate that results in ``hit'' or ``miss''.

\paragraph{Sushi} This is a problem of preference elicitation, where the goal is to learn the full preference order of an unseen user on 10 different types of sushi with as few pair-wise comparison queries as possible. The hypothesis space consists of $10!$ potential full rankings, and each query is a pair-wise comparison between 2 sushi types that results in ``true'' or ``false''.


\paragraph{Network} We consider a fault localization task on a network of 100 nodes organized as a tree, where each link has 2\% chance of failure. The task is to learn an efficient scheme for querying pair-wise connectivities to localize the failure with as few queries as possible. The hypothesis space consists of all $2^{100}$ combinations of failures. The query is a pair-wise connectivity check, restricted to all 99 direct link checks and 300 additional pairs of fixed nodes with measurement equipment.

\subsection{BATTLESHIP}

The task is to infer the locations of all the ships with as few queries as possible. The board is a 10 by 10 grid, with 5 ships of size $2\times4, 1\times5, 1\times3, 1\times3, 1\times3$ placed randomly with arbitrary horizontal or vertical orientations. Adjacent placements are allowed, but the ships may not overlap with each other. See Figure \ref{fig:battleship_belief} for an example board.

At each stage of the game, our observation collection algorithm OC selects the most-informative observation and updates its belief space. Figure \ref{fig:battleship_belief} illustrates the belief space at various numbers of observations. Note that without any observations, our model predicts that a ship is more likely to be located near the center of the board rather than toward the edge. In the case of 22 observations, our model queried a ``hit'' on the lower left without completely observing the $1\times5$ long ship in the middle of the board.

\begin{figure}[!ht]
  \centering
    \includegraphics[width=8.2cm,height=7cm,keepaspectratio]{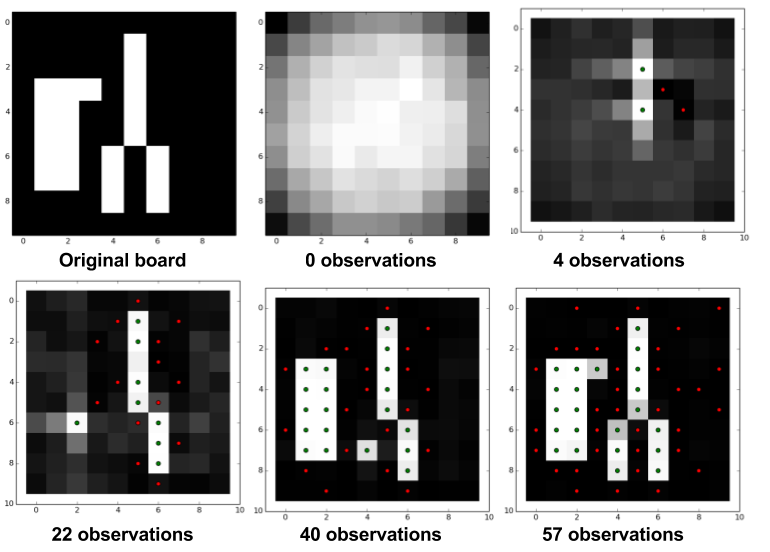}
  \caption{Belief space of observations on a particular board at various numbers of observations. Intensity indicates the probability of a coordinate being a hit, and colored dots indicates past observations: green for ``hit'' and red for ``miss''}
    \label{fig:battleship_belief}
\end{figure}

We consider two kinds of hypothesis delivery schemes. In the first scheme we deliver the probabilities of all the observations using the implication model $P(O'|O^{-})$, implicitly inferring the hidden locations of the ships. In the second scheme we use a constraint solver which takes in the list of observations made by OC and returns a hypothesis $s$ that is consistent with these observations.

For comparison we consider 2 baseline algorithms: The random sampler \textbf{rand} samples unseen coordinates at random. The \textbf{sink} algorithm samples unseen coordinates at random, and when it has found a hit, it queries all its neighboring coordinates until no ``hit'' coordinates can be found, then resumes random sampling. Both the random and sink algorithms initially mark all coordinates as ``miss'' and update them to ``hit'' when a hit has been recorded. 

To evaluate performance under the first hypothesis delivery scheme, we let OC output the maximum likelihood guess for each coordinate. Accuracy is measured as a fraction of correctly guessed coordinates. Figure \ref{fig:battleship_pred} compares accuracy across all coordinates as a function of number of observations; an accuracy of 1.0 means all coordinates are guessed correctly. In this experiment, we consider 3 different variant of the implication model: \textbf{oc\_1} is the single hidden-layer fully-connected model originally described, \textbf{oc\_0} is the 0-layer neural-network model (logistic regression), and \textbf{oc\_cnn} contains a convolutional neural-network layer before a fully-connected hidden layer. As we can see, the random algorithm improves linearly as expected, the sink heuristic performs better than random, and our approach OC performs the best, with \textbf{oc\_1} and \textbf{oc\_cnn} performing better than the logistic regression \textbf{oc\_0}. Figure \ref{fig:battleship_pred_noise} considers the same experiment except a 10\% observation error is introduced, under this condition all 3 variants of the OC similarly, and more robust to noise than the baseline algorithms. 

\begin{figure}[!ht]
  \centering
    \includegraphics[width=8.2cm,height=7cm,keepaspectratio]{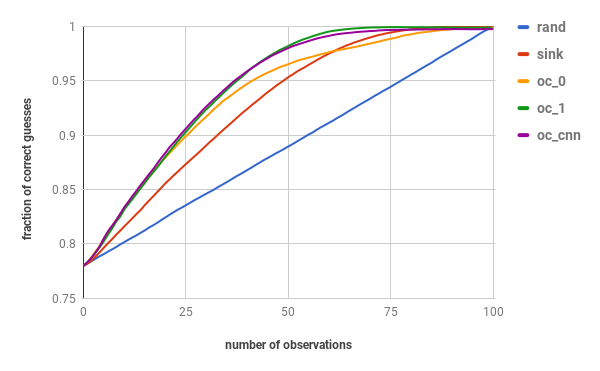}
  \caption{Comparison of our algorithm \textbf{oc} against the 2 baselines. Accuracies are averaged over 1000 randomly generated boards}
    \label{fig:battleship_pred}
\end{figure}

\begin{figure}[!ht]
  \centering
    \includegraphics[width=8.2cm,height=7cm,keepaspectratio]{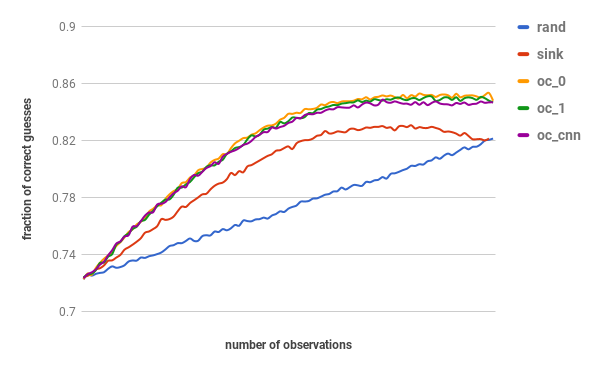}
  \caption{Comparison of our algorithm \textbf{oc} against the 2 baselines. Accuracies are averaged over 1000 randomly generated boards with 10\% chance of observation error}
    \label{fig:battleship_pred_noise}
\end{figure}

To evaluate the performance under the second hypothesis delivery scheme, we use a constraint solver to produce a hypothesis in the form of ships' locations and orientations, using observations collected by \textbf{rand}, \textbf{sink} and \textbf{oc} as constraints to the hypothesis. We then measure accuracy by the number of correctly predicted ship's locations and orientations: 5 means all 5 ship's locations and orientations are correctly produced by the constraint solver given the observations collected. Figure \ref{fig:battleship_solver} compares number of correctly guessed ships as a function of number of observations. As we can see our algorithm OC performs the best, followed by sink with random performing the worst. Note that constraint solvers are known to produce arbitrary hypotheses that satisfy the constraint in an under constrained system, yet despite this, OC was able to consistently out perform the baseline algorithms.

\begin{figure}[!ht]
  \centering
    \includegraphics[width=8.2cm,height=7cm,keepaspectratio]{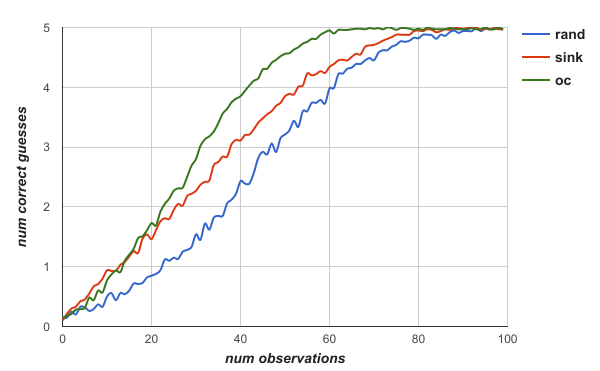}
  \caption{Comparison of our algorithm \textbf{oc} against the 2 baselines when using a constraint solver for hypothesis delivery. Accuracies are averaged over 100 randomly generated boards. Only the single-hidden-layer implication model is considered here.}
    \label{fig:battleship_solver}
\end{figure}

\subsection{SUSHI}

The sushi data set collected by Kamishima (2003) contains 5000 user preferences for 10 kinds of sushi expressed in \emph{full rankings}. The dataset also contains feature vectors describing each individual type of sushi and describing the users, which in this study we omit. Our task is the following: Given a new user, how to infer the full ranking of this user with as few pair-wise comparison queries as possible? Naively, this is a sorting problem where one can obtain the full ranking of any permutation of items with $O(n\log n)$ comparisons. However, the preference orderings are not uniformly random, for instance, a preference of eel over tuna may indicate a user's liking of cooked sushi over raw sushi. We evaluate accuracy by using the Kendall correlation: a value of 1.0 means all pair-wise orderings of our prediction and the ground truth agree with each other, and a value of -1.0 indicates all pair-wise orderings are in disagreement.

There is no related work on this dataset that attempts to discover the full preference of a new user based on pair-wise queries to the new user. Soufiani (2013) models the sushi preferences as a generative process where the preferences are caused by a combination of user features (such as age and gender) and sushi features (such as price) but does not perform elicitation on a new user in the form of pair-wise queries. Without further elicitation, they are able to infer a new user's preference with a Kendall correlation of 0.75. Guo (2010) performs elicitation on a new user in the form of pair-wise queries but instead of learning the full preference, attempts to recommend the best sushi to the user. They are able to always predict the best sushi after 14 pair-wise comparisons. Thus, for comparison we consider various in-place sorting algorithms as baseline. Each time a pair-wise comparison question is made, we take a snapshot of the current array and extract pair-wise orderings from it.

The 5000 user preferences are split into a 2500 preferences training set and 2500 preferences testing set. In order to train our implication model to handle novel permutations not seen during the training set, we augment the training set by a set of randomly sampled permutations in addition to the 2500 preferences.

For this experiment, we can measure performance directly as Kendall correlation without delivering an explicit ordering as the hypothesis (If one wishes one can easily compute a full ordering from all pair-wise orderings). Figure \ref{fig:sushi_pred} compares our observation collection algorithm \textbf{oc\_0} and \textbf{oc\_1} against various in-place sorting algorithms: BubbleSort \textbf{bsort}, QuickSort \textbf{qsort}, and MergeSort \textbf{msort}. As we can see, even without any observations OC was able to obtain a Kendall correlation of 0.3, indicating the underlying distribution for preferences is not uniform; by comparison, the baseline sorting algorithms make no assumptions on the underlying distribution, thus starts with a correlation around 0. Our scheme was able to infer the full ranking of any user in 26 queries, beating the performance of \textbf{qsort}, which takes 40 queries. 

\begin{figure}[!ht]
  \centering
    \includegraphics[width=8.2cm,height=7cm,keepaspectratio]{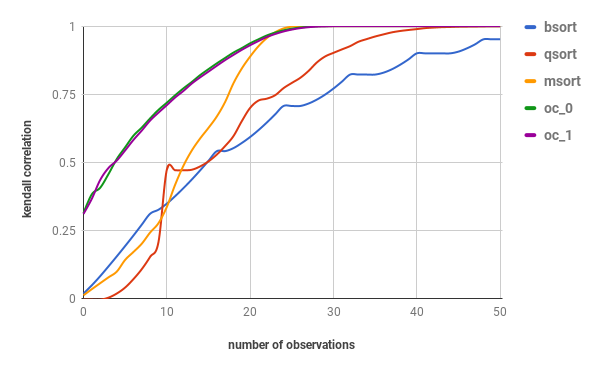}
  \caption{Kendall correlation as a function of number of queries averaged across 2500 testing examples}
    \label{fig:sushi_pred}
\end{figure}

We also considered the case where the query has a 10\% chance of error, the result is shown in Figure \ref{fig:sushi_pred_noise}. Note the OC algorithm is robust in the presence of noise as it improves its pair-wise preferences incrementally with each observation (a property also shared by \textbf{bsort}) whereas deterministic baseline such as \textbf{qsort} and \textbf{msort} sorts the preferences assuming all observations are perfect. 

\begin{figure}[!ht]
  \centering
    \includegraphics[width=8.2cm,height=7cm,keepaspectratio]{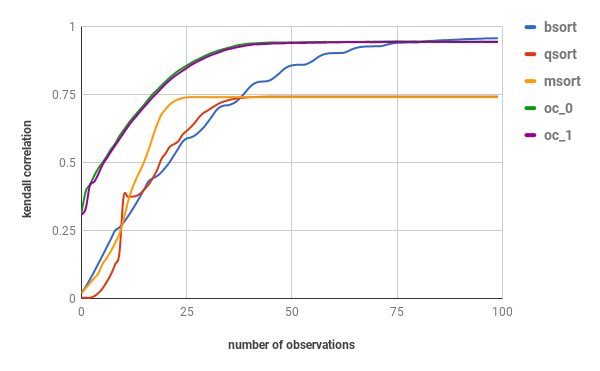}
  \caption{Kendall correlation as a function of number of queries averaged across 2500 testing examples, where each query has a 10\% chance of error}
    \label{fig:sushi_pred_noise}
\end{figure}

\subsection{NETWORK}

We consider the task of fault localization on a network of nodes organized as a tree. The network without any failure is shown in Figure \ref{fig:network}. There are 100 nodes in this network with 99 direct links, forming a spanning tree. The hypothesis space consists of failure cases where each direct link has a probability of 2\% of failure. The query is a pair-wise connectivity check, restricted to all 99 direct link checks and 300 additional pairs of fixed nodes. 

\begin{figure}[!ht]
  \centering
    \includegraphics[width=8.2cm,height=7cm,keepaspectratio]{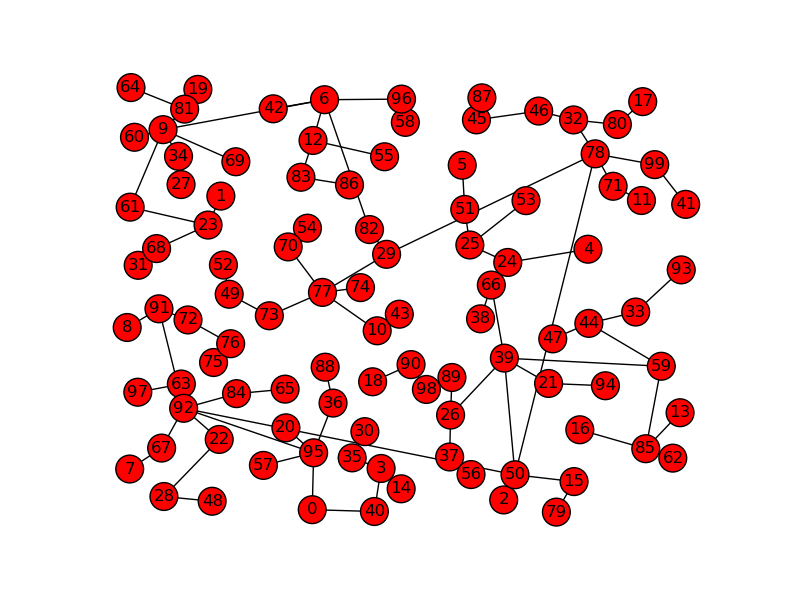}
  \caption{The network without any failure}
    \label{fig:network}
\end{figure}

For hypothesis delivery, we use the implication model $P(O'|O^{-})$ to output the probability of failure on the 99 directly connected pairs of nodes. To measure accuracy, we use the maximum likelihood guess for these pairs and measure the fraction of correctly diagnosed link failures.

We compare our approach with the naive localization scheme \textbf{rand} that randomly picks an unobserved direct link and checks if it is disconnected. The random scheme will always be able to diagnose all the link failures in 99 observations. However, OC can leverage the network structure to learn an efficient querying scheme by utilizing connectivity queries to the additional pairs of nodes. Figure \ref{fig:network_pred} shows accuracy of OC against the random algorithm. We see OC outperforms the random algorithm most of the time, reaching 99.5\% accuracy in 60 observations. OC performs worse than the random scheme for less than 10 observations because OC does not query the direct links for the first few observations: The 300 additional pairs can yield more information without pin-pointing an exact failure. OC again performs worse than the random scheme past 80 observations. This is a consequence of the greedy approach: OC maximizes information gain one observation at a time, but the set of observations selected by the random approach (all direct links) is the optimal set of size 99, capable of answering any failure queries. OC fails to find an alternative optimal set partly because there are instances where many links fail simultaneously, which weakens the structures that our implication model depends on.

\begin{figure}[!ht]
  \centering
    \includegraphics[width=8.2cm,height=7cm,keepaspectratio]{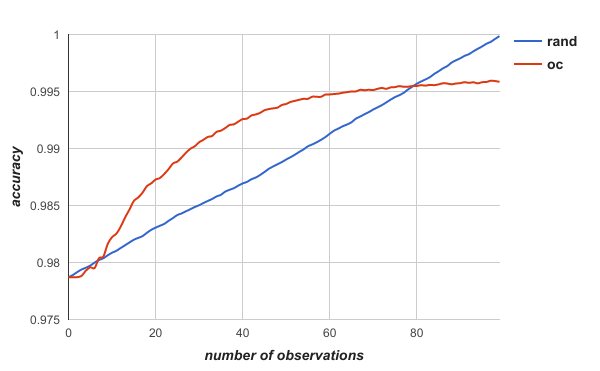}
  \caption{Accuracy of link failure diagnosis averaged across 1400 random instances}
    \label{fig:network_pred}
\end{figure}

To quantify the effects of structure on the difficulty to diagnose a link failure, we define the \emph{dependency coefficient} for a direct link $e_i$ as: 

\begin{equation*}
	dep_i = \frac{M_i N_i}{(M_i + N_i)(M_i + N_i - 1)}
\end{equation*}

Where $M_i$ and $N_i$ are the sizes of the disconnected subgraphs when only link $e_i$ fails. The dependency coefficient measures the fraction of pair-wise connectivity which would be affected should $e_i$ fail. When $e_i$ is located next to a leaf node in the network, this coefficient is close to 0.01. When disconnecting $e_i$ breaks the network into 2 equally sized components, the coefficient is close to 0.25.

Figure \ref{fig:dependency_coef} measures accuracy of diagnosis against dependency coefficient. As one can see, in general higher dependency coefficient means better diagnosis: When many pair-wise connectivities depend on a link, successful pair-wise connection on any of these pairs will imply the link has not failed; conversely, when this link fails all pair-wise connections that depend on this link fail as well. This property is useful because a link which many pair-wise connections depend on would have a better chance of being correctly diagnosed.
\begin{figure}[!ht]
  \centering
    \includegraphics[width=8.2cm,height=7cm,keepaspectratio]{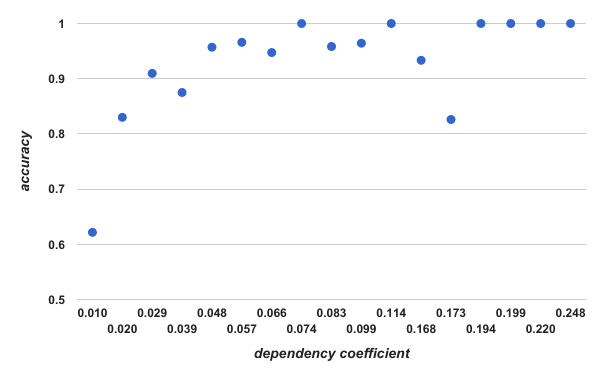}
  \caption{Accuracy of link failure diagnosis as a function of dependency coefficients, averaged over 1400 instances }
    \label{fig:dependency_coef}
\end{figure}

\section{DISCUSSION}

In this paper we present a new approach to active diagnosis in domains with complex hypothesis spaces. We show that given the uniform observation entropy assumption, the problem of choosing the most informative query reduces to maximizing the conditional observation entropy $H(O'|O^{-})$, which can be computed from $P(O'|O^{-})$ without explicit modeling of the hypothesis space. We describe a neural network modeling of $P(O'|O^{-})$ called an implication model, and demonstrate that it can be trained in a supervised fashion. We evaluate our approach on several simple yet distinct benchmarks, demonstrating superior performance against reasonable baseline algorithms.

As our proof and experiments depends on the UOE assumption, the applicability of the UOE assumption is a legitimate concern. While it is true that UOE assumption is not valid under all diagnostic problems, UOE does apply to the class of problems where the same kind of sensor is used to gather signals at different locations of a deterministic environment, including robotic localization, geostatistics, and optimal sensor placements.

A unique feature of our approach is that the implication model can be trained on unlabeled observation data in a supervised fashion. A good direction for future work would be to apply this approach to domains where there are an abundance of observation data without annotations to the hypothesis that generated it. The implication model learns interesting structures of the observation data; it would be instructive to quantify how easy it is to learn different kinds of structures.

\subsubsection*{Acknowledgements}
We would like to thank Rohit Singh for the encoding of the BattleShip SAT formulation, Shachar Itzhaky for insightful discussions, and Twitch Chat for moral supports.

This work was funded by the MUSE program (Darpa grant
FA8750-14-2-0242). 


\subsubsection*{References}


Bellala, Gowtham, et al. "A rank-based approach to active diagnosis." IEEE transactions on pattern analysis and machine intelligence 35.9 (2013): 2078-2090.

Golovin, Daniel, and Andreas Krause. "Adaptive submodularity: Theory and applications in active learning and stochastic optimization." Journal of Artificial Intelligence Research 42 (2011): 427-486.

Guo, Shengbo, Scott Sanner, and Edwin V. Bonilla. "Gaussian process preference elicitation." Advances in Neural Information Processing Systems. 2010.

Holub, Alex, Pietro Perona, and Michael C. Burl. "Entropy-based active learning for object recognition." Computer Vision and Pattern Recognition Workshops, 2008. CVPRW'08. IEEE Computer Society Conference on. IEEE, 2008.

Joshi, Ajay J., Fatih Porikli, and Nikolaos Papanikolopoulos. "Multi-class active learning for image classification." Computer Vision and Pattern Recognition, 2009. CVPR 2009. IEEE Conference on. IEEE, 2009.

Krause, Andreas, and Carlos Guestrin. "Submodularity and its applications in optimized information gathering." ACM Transactions on Intelligent Systems and Technology (TIST) 2.4 (2011): 32.

Le, Quoc V. "Building high-level features using large scale unsupervised learning." Acoustics, Speech and Signal Processing (ICASSP), 2013 IEEE International Conference on. IEEE, 2013.

Lewis, David D., and William A. Gale. "A sequential algorithm for training text classifiers." Proceedings of the 17th annual international ACM SIGIR conference on Research and development in information retrieval. Springer-Verlag New York, Inc., 1994.

Littman, Michael L., Judy Goldsmith, and Martin Mundhenk. "The computational complexity of probabilistic planning." Journal of Artificial Intelligence Research 9.1 (1998): 1-36.

Mnih, Volodymyr, Nicolas Heess, and Alex Graves. "Recurrent models of visual attention." Advances in neural information processing systems. 2014.

Nowak, Robert. "Generalized binary search." Communication, Control, and Computing, 2008 46th Annual Allerton Conference on. IEEE, 2008.

Rish, Irina, et al. "Real-time problem determination in distributed systems using active probing." Network Operations and Management Symposium, 2004. NOMS 2004. IEEE/IFIP. Vol. 1. IEEE, 2004.

Seung, H. Sebastian, Manfred Opper, and Haim Sompolinsky. "Query by committee." Proceedings of the fifth annual workshop on Computational learning theory. ACM, 1992.

Soufiani, Hossein Azari, David C. Parkes, and Lirong Xia. "Preference elicitation for general random utility models." arXiv preprint arXiv:1309.6864 (2013).

Srivastava, Nitish, et al. "Dropout: a simple way to prevent neural networks from overfitting." Journal of Machine Learning Research 15.1 (2014): 1929-1958.

T. Kamishima, "Nantonac Collaborative Filtering: Recommendation Based on Order Responses", KDD2003, pp.583-588 (2003)

Tong, Simon, and Daphne Koller. "Support vector machine active learning with applications to text classification." Journal of machine learning research 2.Nov (2001): 45-66.

Tong, Simon, and Edward Chang. "Support vector machine active learning for image retrieval." Proceedings of the ninth ACM international conference on Multimedia. ACM, 2001.

Wikipedia contributors. "Battleship (game)." Wikipedia, The Free Encyclopedia. Wikipedia, The Free Encyclopedia, 11 Mar. 2017. Web. 26 Mar. 2017.

Zheng, Alice X., Irina Rish, and Alina Beygelzimer. "Efficient test selection in active diagnosis via entropy approximation." arXiv preprint arXiv:1207.1418 (2012).

\end{document}